\documentclass[review]{elsarticle}

\usepackage{graphicx}
\usepackage{amsmath,amssymb} 
\usepackage{color}
\usepackage{amsthm}
\usepackage{amsthm}
\newtheorem{theorem}{Theorem}[section]

\theoremstyle{definition}

\theoremstyle{remark}

\journal{arXiv.org}









\bibliographystyle{elsarticle-num}

\begin{document}

\begin{frontmatter}

\title{Depth Image Upsampling based on Guided Filter with Low Gradient Minimization}


\author[mymainaddress]{Hang Yang\corref{mycorrespondingauthor}}
\cortext[mycorrespondingauthor]{Corresponding author}
\ead{yanghang@ciomp.ac.cn}

\author[mysecondaryaddress]{Zhongbo Zhang}

\address[mymainaddress]{Changchun Institute of Optics, Fine Mechanics and Physics, Chinese Academy of Science,
Changchun 130033, China}
\address[mysecondaryaddress]{Jilin University, Changchun, 130012,China}

\begin{abstract}
In this paper, we present a novel upsampling framework to  enhance the spatial resolution of the depth image.
In our framework, the upscaling of a low-resolution depth image is guided by a corresponding intensity images, we
formulate it as a cost aggregation problem with the guided filter. However, the guided filter does not make full use of the properties of the depth image. Since depth images have quite sparse gradients, it inspires us to regularize the gradients for improving depth upscaling results. Statistics show a special property of depth images, that is, there is a non-ignorable part of pixels whose horizontal or vertical derivatives are equal to $\pm 1$. Considering this special property, we propose a low gradient regularization method which reduces the penalty for horizontal or vertical derivative $\pm1$. The proposed low gradient regularization is integrated with the guided filter into the depth image upsampling method. Experimental results demonstrate the effectiveness of our proposed approach both qualitatively and quantitatively compared with the state-of-the-art methods.
\end{abstract}

\begin{keyword}
\texttt{Depth image}\sep upsampling \sep low gradient minimization \sep guided filter \sep regularization method
\end{keyword}

\end{frontmatter}


\section{Introduction}

Over the last decade, RGB-D sensors have made rapid development, such as  Microsoft Kinect, Intel Leap Motion and  ASUS Xtion Pro. They enable a variety of applications based on the depth image of the scenes, for instance, pose estimation \cite{Girshick} and scene understanding \cite{Gupta}. However, current depth cameras are limited by manufacturing and physical constraints. Hence, depth images are affected by degenerations due to noise, missing values, and typically have a low resolution \cite{Riegler}.
To mitigate the use of depth data, we need to recover the corresponding high-resolution (HR) depth image from a given low-resolution (LR) depth image.

Depth image upsampling is a quite challenging task. Specifically, due to the limited spatial resolution, the LR image loses or distorts fine structures in the HR image. A brute-force upscaling method often causes those structures which have sharp edges become blurred in the upsampled image. Especially for the case of single-image upscaling, the severely distorted fine structures often exists \cite{Hui}.

To address the above problem, a common approach is to utilize a corresponding HR intensity image as guidance \cite{Ferstl13,Kwon}. This is based on the fact that a correspondence between a depth edge and an intensity edge can be most likely established.
Some of the most successful algorithms for upsampling depth images aim at exploiting this correspondence assumption.

In this work, we present a novel method that combines the advantages of guided filter and the energy minimization model to compute an accurate high-resolution output from a LR depth map with the corresponding HR intensity image.
In recent years, the guided filter as a new edge-preserving technology has also been employed in a wide range of applications, such as image deconvolution \cite{YhangICIP}, image super-resolution \cite{He13} and image fusion \cite{Kang13}. Since the depth image is smooth, it is appropriate to be processed by guided filter who has shown to be effective for textureless image. And the guided filter can effectively fuse images from different sensors. Inspired by these, we attempt to use guided filter for depth image upsampling. However, the properties of depth images are not be fully exploited by the guided filter. A shallow observation that gradients of the depth image are 0 at most places. Therefore, together with the textureless property, we also regularize the depth maps with the sparse gradient prior in the meanwhile.

Based on the statistics of the depth image gradient, we are surprised to find that the sparse gradient model is not accurate enough \cite{Cai17}. In the depth image, although more than $80\%$ pixels have zero gradients (see Fig.1),
there is a non-ignorable part of pixels whose horizontal or vertical derivatives are equal to $\pm1$ (their proportion is about $15\%$, see Fig.1).
In other word, at many places, gradients of the depth map do not vanish but rather very small \cite{Cai17}.
This property has not been considered for depth image upsampling because it is not universal in natural images.
Hence, we develop a specific gradient regularization which is denoted  as $l^{t}_{0}$ gradient regularization.
Unlike the $l_{0}$ norm which penalizes the non-zero elements equally (the norm is always 1 if the element is not 0) \cite{Xu11,Nguyen15}, our proposed $l^{t}_{0}$ measure reduces the penalty for horizontal or vertical derivative 1
and thus allows for gradual depth changes.

The main contributions of this work are three-folds: (i) We present a specific gradient regularization $l^{t}_{0}$ which well describes the statistical property of the depth image gradients. (ii) We propose a solution to $l^{t}_{0}$  gradient minimization problem based on threshold shrinkage. (iii)We integrate the proposed $l^{t}_{0}$ gradient regularization with the guided filter into the the depth image upsampling method. In the experiments, we demonstrate that the proposed method provides competitive results compared with state-of the-art algorithms.

The rest of the paper is organized as follows: Section 2 briefly introduces related work in depth image upsampling.
Section 3 describes the proposed approach which considers the guided filter and the low gradient regularization.
In Section 4, we perform simulations on the benchmark dataset and show the effectiveness of our method.
We conclude the work in Section 5.

\section{Related Work}

There are many methods to perform depth image upsampling in the literature.
In general, they can be categorized into four classes:

\textbf{Exemplars based approaches.}These type of approaches build dictionaries for the LR and HR domains that are coupled by a common encoding. Yang $et~al.$ \cite{Yang10} seek the coefficients of this representation to obtain a upsampling result.
To improve the inference speed, Timofte $et~al.$ \cite{Timofte13} introduce the anchored neighborhood regression.
Li $et~al.$ \cite{Li12} present a joint examples-based upscaling approach. Ferstl $et~al.$ \cite{Ferstl15} present a dictionary learning method with edge priors for an anisotropic guidance. Schulter $et~al.$ \cite{Schulter15} use random regression forests instead of the flat code-book of sparse coding methods. Mahmoudi $et~al.$ \cite{Mahmoudi12} denoise noisy samples and learn a depth dictionary from noisy and denoised samples.

\textbf{Local image filtering.} Kopf $et~al.$ \cite{Kopf11} propose a joint bilateral filter based algorithm to smooth each depth pixel by considering the intensity similarity between the center pixel and its neighborhood. Yang $et~al.$ \cite{Yang07} present a method based on the bilateral filter that is iteratively used to generate an upsampled result. Geodesic distances is used to design the upsamapling weights in \cite{Liu13}, and Lu $et~al.$ \cite{Lu15} propose a smoothing approach to upscale depth map with the use of image segmentation. Li $et~al.$ \cite{LiDo16} develop a fast guided interpolation (FGI) approach based on weighted least squares, which densifies depth maps by global interpolation with alternating guidances.

\textbf{Global energy minimization methods.} These approaches formulate depth upscaling as an optimization problem
which employs data fidelity and regularization term. Diebel $et~al.$ \cite{Diebel06} develop Markov Random Field (MRF) based energy minimization framework, which fuses the LR depth map and the corresponding HR intensity image. In order to maintain local structures, Park $et~al.$ \cite{Park11} use a nonlocal means filter (MRF+NLM) to regularize the depth map. A more recent approach of
Ferstl $et~al.$ \cite{Ferstl13} utilizes an anisotropic diffusion tensor to guide the depth map upsampling (TGV), the tensor is calculated from a HR intensity image. Aodha $et~al.$ \cite{Aodha12} treat depth image upsampling as as MRF labeling problem which matches LR depth image patches to HR patches from an ancillary database. In \cite{Yang14}, an adaptive intensity guided regression method is proposed for depth upsampling.

\textbf{Deep learning based methods.} More recently, deep learning methods have become popular for single image upsampling.
A convolutional neural network (CNN) of three layers is trained in \cite{Dong14}, and Kim $et~al.$ \cite{Kim15} improve this approach substantially. Dong $et~al.$ \cite{Dong14PAMI} present an end-to-end upsampling convolutional neural network (SRCNN) to achieve image super-resolution. Xie $et~al.$ \cite{Xie16} propose a CNN framework for the single depth image upsampling guided by a reconstructed HR edge map. These methods have mainly been used to intensity images, where a great amount of training samples can be easily obtained. In contrast, huge data sets with dense, accurate depth maps have recently become available, e.g. \cite{Handa16}. Hui $et~al.$ \cite{Hui16} present a CNN framework in a multi-scale guidance architecture (MSG-Net).
Riegler $et~al.$ \cite{Riegler} integrate a energy minimization model with anisotropic TGV regularization into a end-to-end convolutional network for a single depth image upsampling.

\section{Depth Image Upsampling}

Given a original HR intensity image $I_{H}$ and a LR depth image $I_{L}$, we hope to obtain a HR depth map $u$.  If $I_{H}$ is a RGB image, it should be converted from the RGB space to the gray space. We first generate a coarse estimated depth image $D_{\uparrow}$ by bicubic interpolation from $I_{L}$, the resolution of $D_{\uparrow}$ and $I_{H}$ are equal.

In conventional image restoration problems, the guided filter performs very well in terms of both quality and efficiency \cite{YhangICIP}\cite{He13}. The filter can smooth image with the edge-preserving property as the bilateral filter, but there is no gradient reversal artifacts. The advantage of guided filter is very suitable for processing depth image, so we introduce  the guided filter into the upsampling algorithm. As discussed in Section 1, we first add the sparse gradient regularization for depth upsampling. Altogether we construct a new formula as follows:

\begin{equation}\label{eq01}
  \min_{u} \parallel u - D_{\uparrow} \parallel^{2}_{2} + \rho \parallel u - GF(u,I_{H}) \parallel^{2}_{2} + \eta \parallel \nabla u \parallel_{0}
\end{equation}
where $\nabla u =(u_{x},u_{y}) =(\partial_{x} \ast u,\partial_{y} \ast u)$ is the gradient of $u$, $\partial_{x} = [1,-1]$ and $\partial_{y} = [1,-1]^{T}$ are the horizontal and vertical derivative operator, respectively, $\rho$ and $\eta$ are two regularization parameters, we call this method \textbf{GFL0} .

Because $\textbf{GF}(\cdot,\cdot)$ is highly nonlinear, it is difficult to solve the problem directly. Following the solution in \cite{YhangICIP}, we employ a split variable approach
to solve the Eq.(\ref{eq01}) and the variables are iteratively updated:

\begin{eqnarray}
  z &=& GF(u,I_{H})\label{eqns1.2}\\
  u &=& \arg \min_{u} \parallel u - D_{\uparrow} \parallel^{2}_{2} + \rho \parallel u - z \parallel^{2}_{2} + \eta \parallel \nabla u \parallel_{0} \label{eqns1.1}
\end{eqnarray}
The minimization problem in Eq.(\ref{eqns1.1}) is widely used in image restoration models, many approaches have been proposed to solve it directly and approximately \cite{Xu11}\cite{Storath14}\cite{Nguyen15}.

$l_{0}$ gradient minimization does not always perform well \cite{Cai17}, one of the reasons may be the solution is only an approximation.
That is to say, we do not make full use of the properties of the gradient maps.  We count the gradient histogram of depth images and find out that the sparse gradient assumption is not accurate enough. In addition to 0, there is a non-ignorable part of pixels have horizontal or vertical derivative $\pm1$. In $l_{0}$ regularization, all nonzero gradients are penalized equally\cite{Cai17}.
Based on the special property of depth images, we propose a new gradient regularization algorithm to reduce the penalty for horizontal or vertical derivative $\pm1$.

\subsection{$l_{0}^{t}$ gradient minimization}
In this subsection, we will show the statistics of depth image gradients and describe the proposed $l_{0}^{t}$ gradient regularization method.

\begin{figure*}[!t]\label{fig.1}
\begin{minipage}[t]{0.5\linewidth}
\centering
\hspace*{-0.75cm}
\includegraphics[width=1.15\linewidth]{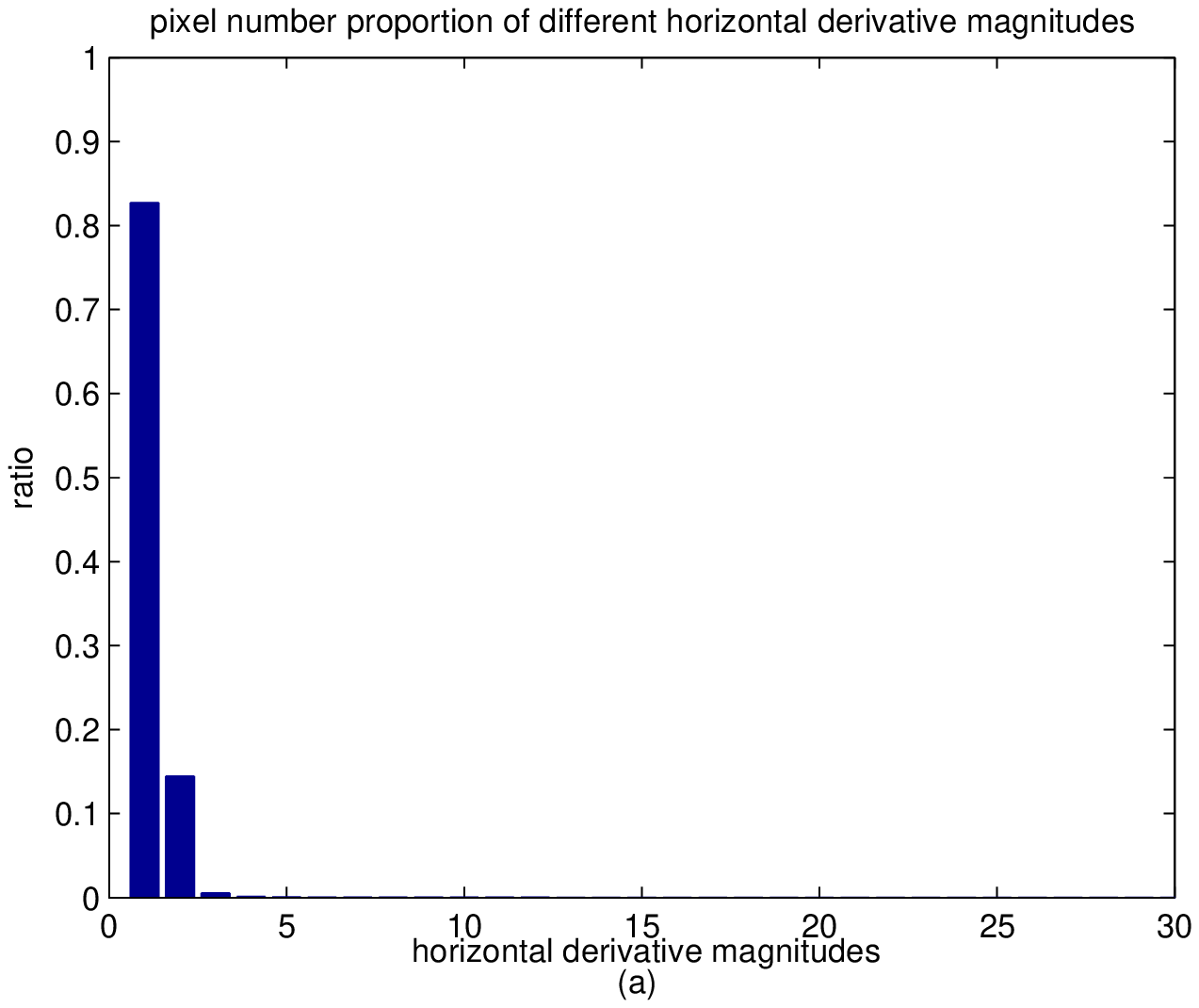}
\label{fig:side:a}
\end{minipage}%
\begin{minipage}[t]{0.5\linewidth}
\centering
\hspace*{-0.5cm}
\includegraphics[width=1.15\linewidth]{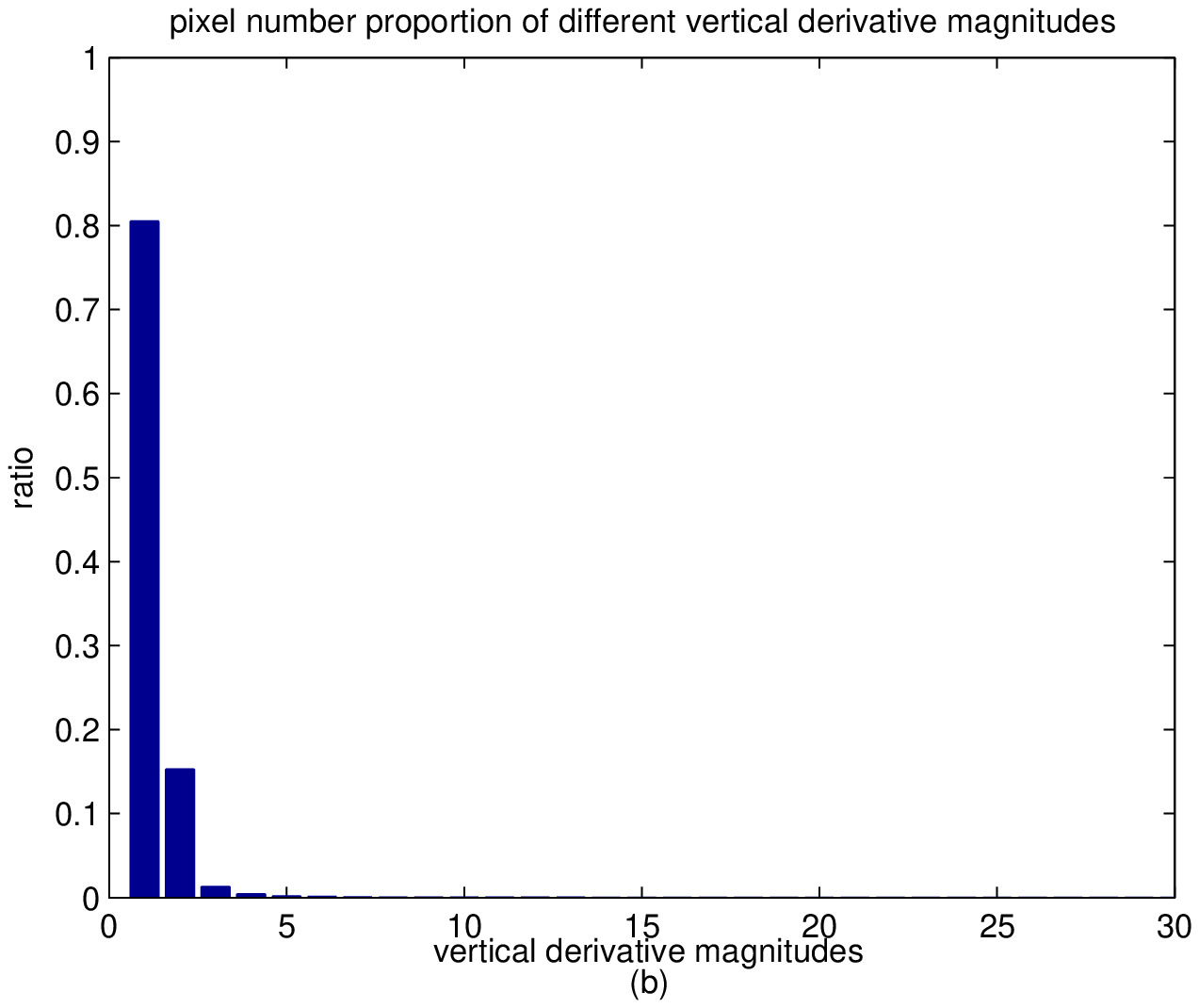}
\label{fig:side:b}
\end{minipage}
\vspace*{-0.75cm}
\caption{Horizontal and vertical derivative magnitude histograms of groundtruth disparity map of
Middlebury Stereo Datasets. a) horizontal derivative.(b) vertical derivative. We can see most pixels have gradient magnitude 0 and a non-ignorable part have magnitude 1.}\label{fig01}
\end{figure*}

In this work,  we use Middlebury Stereo Datasets(2001, 2003,2005,2006 and 2014) to do statistics on the depth gradients, and show the horizontal and vertical derivative magnitude histograms in Fig.\ref{fig01}. It is observed that depth image gradients cannot be simply described as sparse. We can see that most pixels have gradient magnitude 0 and a non-ignorable part whose gradients are $(\pm 1,\pm1)$, $(0,\pm1)$ or $(\pm1, 0)$.  Similar to the Total variation (TV) model, we propose a $l_{0}^{t}$ norm to reduce the penalty for horizontal and vertical derivative 1.

We define the  $\parallel \cdot \parallel_{l^{t}_{0}}$ norm as:
\begin{equation}\label{sec1.e02}
     \parallel \nabla u \parallel_{l^{t}_{0}} = \parallel u_{x} \parallel_{l^{t}_{0}} + \parallel u_{y} \parallel_{l^{t}_{0}}
\end{equation}
where
\begin{equation}\label{sec1.e03}
    \parallel f \parallel_{l^{t}_{0}} =\sharp\{i|\mid f_{i}\mid > 1\} + t\sharp\{i| \mid f_{i}\mid = 1\}
\end{equation}
and $0<t<1$, $\sharp\{\cdot\}$ is the number of elements in the data. In order to construct a problem that can be applied to a continuous domain, we revise the definition of $\parallel \cdot \parallel_{l^{t}_{0}}$ as
\begin{equation}\label{sec1.e031}
    \parallel f \parallel_{l^{t}_{0}} =\sharp\{i| \mid f_{i}\mid > 1\} + t\sharp\{i| 0 < \mid f_{i}\mid \leq 1\}
\end{equation}

Actually the $l_{0}^{t}$ ``norm'' is not a proper norm because it is not homogeneous, therefore we call it a measure.
Based on the statistics on the depth gradients, we set $t = 0.75$ in all the experiments.

Thus we use the $\parallel \cdot \parallel_{l^{t}_{0}}$ measure in place of $l_{0}$ norm in Eq.(\ref{eq01}), it leads to the following optimization model:

\begin{equation}\label{sec1.e04}
  \min_{u} \parallel u - D_{\uparrow} \parallel^{2}_{2} + \rho \parallel u - GF(u,ref) \parallel^{2}_{2} + \eta \parallel \nabla u \parallel_{l^{t}_{0}}
\end{equation}

Similar to Eq.(\ref{eq01}), we extend the split variable method to solve Eq.(\ref{sec1.e04}). In this case, two subproblems are as follows:

\begin{eqnarray}
  z &=& GF(u,I_{H})\label{eqns2.2}\\
  u &=& \arg \min_{u} \parallel u - D_{\uparrow} \parallel^{2}_{2} + \rho \parallel u - z \parallel^{2}_{2} + \eta \parallel \nabla u \parallel^{t}_{l_{0}}\label{eqns2.1}
\end{eqnarray}

To solve Eq.(\ref{eqns2.1}),  we introduce auxiliary variables $h$ and $v$, corresponding to $u_{x}$
and $u_{y}$ respectively, and rewrite the cost function as:

\begin{eqnarray}
\nonumber
  \{u,h,v\} &=& \arg \min_{u,h,v} \parallel u - D_{\uparrow} \parallel^{2}_{2} + \rho \parallel u - z\parallel^{2}_{2} \\
    &+& \beta (\parallel h - u_{x}\parallel^{2}_{2} + \parallel v  - u_{y}\parallel^{2}_{2}) + \gamma (\parallel h \parallel^{t}_{l_{0}} + \parallel  v \parallel^{t}_{l_{0}})\label{eqns3.1}
\end{eqnarray}
where $\beta$ is an automatically adapting parameter. Eq.(\ref{eqns3.1}) can be solved through alternatively minimizing $(h,v)$ and $u$, and it is broken into three subproblems in this work:

\begin{eqnarray}
 \nonumber
  u &=& \arg \min_{u} \parallel u - D_{\uparrow} \parallel^{2}_{2} + \rho \parallel u - z\parallel^{2}_{2}\\
    &~&~~~~~~~~~~+ \beta (\parallel u_{x} - h\parallel^{2}_{2} + \parallel u_{y} - v \parallel^{2}_{2})\label{eqns3.1}\\
  h &=& \arg \min_{h} \parallel h - u_{x}\parallel^{2}_{2} +  \lambda \parallel  h \parallel^{t}_{l_{0}} \label{eqns3.2}\\
  v &=& \arg \min_{v} \parallel v - u_{y}\parallel^{2}_{2} +  \lambda \parallel  v \parallel^{t}_{l_{0}} \label{eqns3.3}
\end{eqnarray}

The Eq.(\ref{eqns3.1}) is quadratic and thus has a global minimum.
We use Fast Fourier Transform (FFT) to speedup the diagonalization of derivative operators. These yield solutions in the Fourier domain
\begin{eqnarray}
  \mathcal{F}(u) &=& \frac{\mathcal{F}(D_{\uparrow})+ \rho \mathcal{F}(z) + \beta(\mathcal{F}(\partial_{x})^{*}\mathcal{F}(h) + \mathcal{F}(\partial_{y})^{*}\mathcal{F}(v))}{1+ \rho + \beta(|\mathcal{F}(\partial_{x})|^{2} +|\mathcal{F}(\partial_{y})|^{2})}
\end{eqnarray}
where $\mathcal{F}$ and $\mathcal{F}(\cdot)^{*}$ denote the FFT operator and the complex conjugate, respectively. The plus, multiplication, and division are all component-wise operators.

Now, the remaining question is how to solve the Eqs.(\ref{eqns3.2}) and (\ref{eqns3.3}).
In next subsection, we will show that these two apparently sophisticated subproblems have closed-form solutions and can be solved quickly.

\subsection{The $l_{0}^{t}$ measure minimization}

Without loss of generality, Eqs.(\ref{eqns3.2}) and (\ref{eqns3.3}) can be written in a unified way:
\begin{equation}\label{sec2.e01}
    \sum_{i} \min_{x_{i}} \{(x_{i} - p_{i})^{2} + \alpha H^{t}(p_{i})\}
\end{equation}
where
\begin{equation}\label{sec2.e02}
H^{t}(p)=
   \begin{cases}
   0 &\mbox{if p = 0}\\
   t &\mbox{if $0 <\mid p \mid \leq 1$}\\
   1 &\mbox{if $\mid p \mid > 1$}
   \end{cases}
\end{equation}
Each single term w.r.t. pixel $p_{i}$ in Eq. (\ref{sec2.e02}) is
\begin{equation}\label{t.1}
    E(p) = \min_{p} (x - p)^{2} + \alpha H^{t}(p)
\end{equation}
For Eq.(\ref{sec2.e01}), we can obtain its closed-form solution based on the following theorem.
In this work, we discuss two cases, that is, the situation of $|x|\geq 1$ and the situation of $|x| < 1$. Theorem 3.1 and Theorem 3.2 give the solution of the Eq.(\ref{sec2.e01}) in the case of $|x|\geq 1$, and Theorem 3.3 shows the solution of the Eq.(\ref{sec2.e01}) in the case of $|x| < 1$.

\begin{theorem}

When $|x|\geq 1$, Eq.(\ref{t.1}) reaches its minimum $E^{*}_{p}$ under the condition
\begin{equation}\label{t.12}
   p =
   \begin{cases}
   0 &\mbox{if $ \mid x \mid \leq \min(\frac{1+\alpha t}{2}, \sqrt{\alpha})$}\\
   $$sgn(x)$$ &\mbox{if $\frac{1+\alpha t}{2} < \mid x \mid \leq 1 + \sqrt{\alpha(1-t)}$}\\
   x &\mbox{if $ \mid x \mid > \max(1 + \sqrt{\alpha(1-t)}, \sqrt{\alpha})$}
   \end{cases}
\end{equation}
\end{theorem}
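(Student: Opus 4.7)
The plan is to exploit the piecewise-constant nature of $H^{t}$ and reduce the one-dimensional minimization to a three-way comparison. Since $H^{t}$ takes only the values $0$, $t$, $1$ on the three pieces $\{0\}$, $\{p : 0 < |p| \le 1\}$, and $\{p : |p| > 1\}$ of the real line, I would partition the domain accordingly and minimize $(x-p)^{2}$ on each piece separately, adding the corresponding constant penalty. The global minimum of $E(p)$ is then the smallest of the three resulting values, and reading off the argmin gives the claimed piecewise formula for $p$.

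First I would dispatch the three sub-minimizations. Plugging in $p = 0$ gives the candidate $E_{0} = x^{2}$. On the slab $0 < |p| \le 1$, since we assume $|x| \ge 1$, the quadratic $(x-p)^{2}$ is monotone in $p$ over the slab and is minimized at the boundary point $p = \operatorname{sgn}(x)$, yielding the candidate $E_{s} = (|x|-1)^{2} + \alpha t$. On the exterior $|p| > 1$, the unconstrained minimizer $p = x$ lies in the (closure of the) region because $|x| \ge 1$, giving the candidate $E_{x} = \alpha$. These three candidates correspond exactly to the three branches in the statement.

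Next I would carry out the pairwise comparisons to determine which candidate is the global minimizer. A direct computation yields the equivalences
\begin{equation*}
E_{0} \le E_{s} \iff |x| \le \tfrac{1+\alpha t}{2}, \quad E_{0} \le E_{x} \iff |x| \le \sqrt{\alpha}, \quad E_{s} \le E_{x} \iff |x| \le 1 + \sqrt{\alpha(1-t)}.
\end{equation*}
Then $p = 0$ is optimal precisely when $E_{0}$ beats both $E_{s}$ and $E_{x}$, which collapses to $|x| \le \min\bigl(\tfrac{1+\alpha t}{2},\sqrt{\alpha}\bigr)$. Similarly $p = x$ is optimal when $|x| > \max\bigl(1+\sqrt{\alpha(1-t)},\sqrt{\alpha}\bigr)$. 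The intermediate choice $p = \operatorname{sgn}(x)$ is optimal when $E_{s}$ beats both competitors, i.e.\ $\tfrac{1+\alpha t}{2} < |x| \le 1 + \sqrt{\alpha(1-t)}$.

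The main obstacle is not any single calculation but the bookkeeping needed to see that the three regimes exhaust all possibilities for $|x| \ge 1$ without overlap or gap. This is where the relationships among the three curves $1+\sqrt{\alpha(1-t)}$, $\sqrt{\alpha}$, and $\tfrac{1+\alpha t}{2}$ (the subject of Fig.~2) become important: I would verify, using elementary algebraic manipulations of these expressions, that whenever the $p = \operatorname{sgn}(x)$ window $\bigl(\tfrac{1+\alpha t}{2},\,1+\sqrt{\alpha(1-t)}\bigr]$ is non-empty it is wedged consistently between the $p = 0$ region and the $p = x$ region, and that when the window collapses the $\min$ and $\max$ thresholds meet so that the two surviving regimes abut cleanly. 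Handling ties on the boundary simply amounts to noting that any of the tied minimizers may be selected, so the stated formula is well-defined.
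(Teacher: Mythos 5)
Your argument is correct: restricting the minimization to the three pieces of $H^{t}$ gives the candidate values $E_{0}=x^{2}$, $E_{s}=(|x|-1)^{2}+\alpha t$ and $E_{x}=\alpha$, and the three pairwise comparisons produce exactly the thresholds $\frac{1+\alpha t}{2}$, $\sqrt{\alpha}$ and $1+\sqrt{\alpha(1-t)}$ appearing in the statement, so reading off the winner gives the claimed formula. This supplementary document does not itself contain a proof of this theorem (it is deferred to the main submission), so there is nothing to compare line by line; but your three-candidate comparison is the natural argument, and the exhaustiveness bookkeeping you flag at the end is precisely what Lemma 1.2 supplies (the ordering of the three curves in the three ranges of $\alpha$), which is then used in Theorem 1.2 to collapse the $\min$ and $\max$ in each regime.
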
\label{thm0b}

\begin{proof}
 Without loss of generality, we suppose $x\geq0$, and the proof of $x<0$ case is similar.

\textbf{1)} When $ x > \max(1 + \sqrt{\alpha(1-t)}, \sqrt{\alpha}) >1 $, non-zero $p > 1$ yields

\begin{equation}\label{lem.1}
    E_{p} = (x-p)^{2} + \alpha
\end{equation}
when $p =x$, Eq.(\ref{lem.1}) achieves minimum value $\alpha$.

Note that $p=0$ leads to
\begin{equation}\label{lem.2}
    E_{p} = x^{2} > \alpha
\end{equation}
And we can find that, $0<p\leq 1$ yields
\begin{equation}\label{lem.3}
    E_{p} = (x-p)^{2} + \alpha t
\end{equation}
when $p = 1$, Eq.(\ref{lem.1}) achieves minimum value $(x-1)^{2}+\alpha t$.

Because $ x > 1 + \sqrt{\alpha(1-t)}$, that is to say $ (x-1)^{2}>\alpha (1-t)$,
and $ (x-1)^{2} + \alpha t>\alpha$.

Comparing Eq.(\ref{lem.1}) and (\ref{lem.2}), the minimum energy $E_{p}$ is produced when $p = x$.

\textbf{2)} When $\frac{1+\alpha t}{2} <  x  \leq 1 + \sqrt{\alpha(1-t)}$,  $p > 1$ yields
\begin{equation}\label{lem.4}
    E_{p} = (x-p)^{2} + \alpha \geq \alpha \geq (x-1)^{2} +\alpha t
\end{equation}

$p=0$ leads to
\begin{equation}\label{lem.5}
    E_{p} = x^{2} > (x-1)^{2} +\alpha t
\end{equation}

When $0<p\leq 1$, Eq.(\ref{lem.3}) still holds.
we find that, when $x \geq 1$, the minimum energy Eq.(\ref{lem.3}) is produced when $p = 1$, the minimum value is
$(x-1)^{2}+\alpha t$.

So, in this case, comparing these three values, when $ p = sgn(x) = 1$, $E_{p}$ achieves minimum.

\textbf{3)} When $ x \leq \min(\frac{1+\alpha t}{2}, \sqrt{\alpha})$, Eq.(\ref{lem.1}) still holds, the minimum energy $\alpha$ is greater
than $ x^{2}$.

When $p=0$, $E_{p}$ has its minimum value $x^{2}$.

When $0<p\leq 1$, if $x \geq 1$, Eq.(\ref{lem.3}) can achieve  minimum value $(x-1)^{2}+\alpha t \geq x^{2}$.

So, the minimum energy $E_{p}$ is produced when $p = 0$
\end{proof}

\begin{figure*}[!t]
\begin{minipage}[t]{0.5\linewidth}
\centering
\hspace*{-0.75cm}
\includegraphics[width=1.1\linewidth]{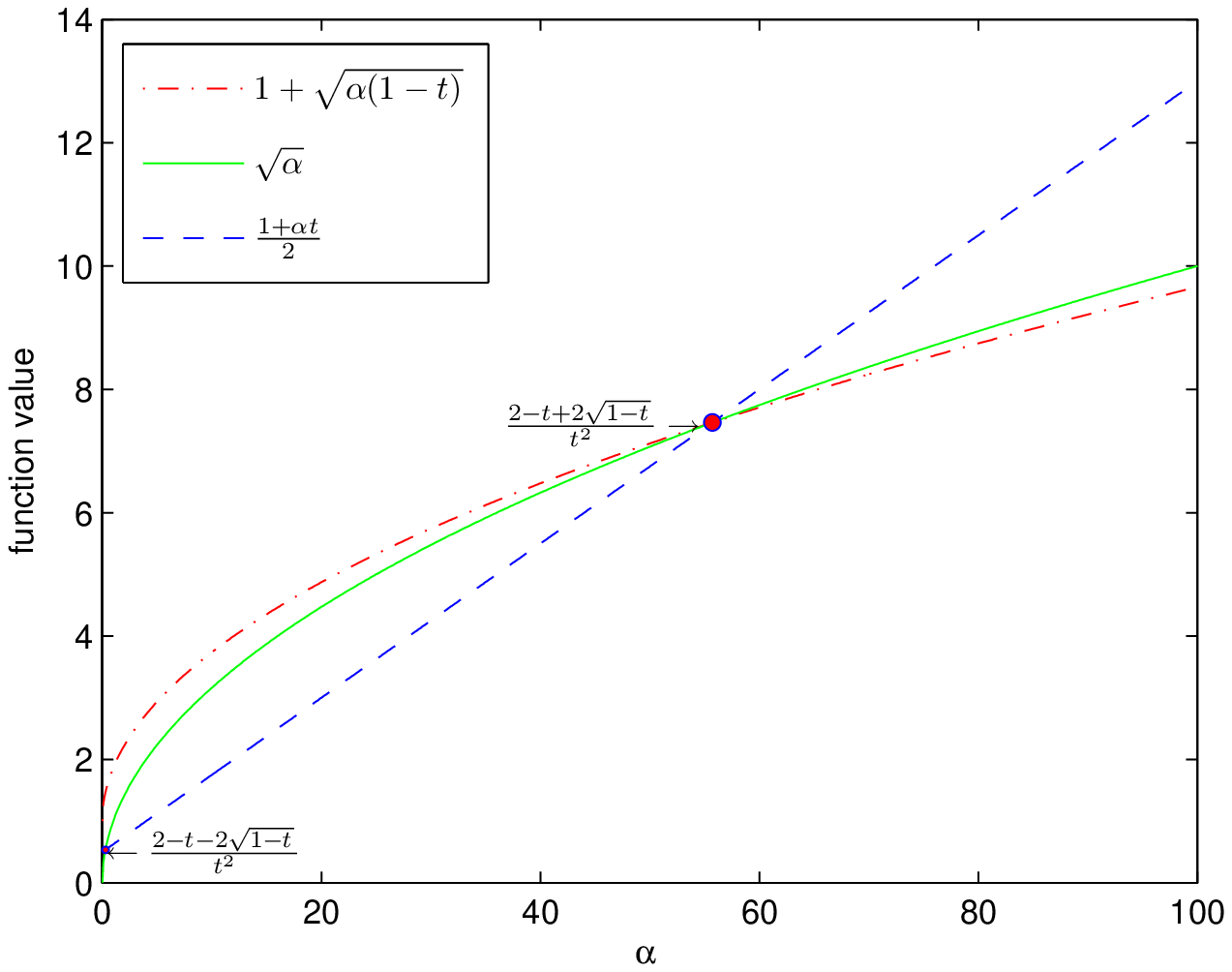}
\label{fig:side:a}
\end{minipage}%
\begin{minipage}[t]{0.5\linewidth}
\centering
\hspace*{0.0cm}
\includegraphics[width=1.1\linewidth]{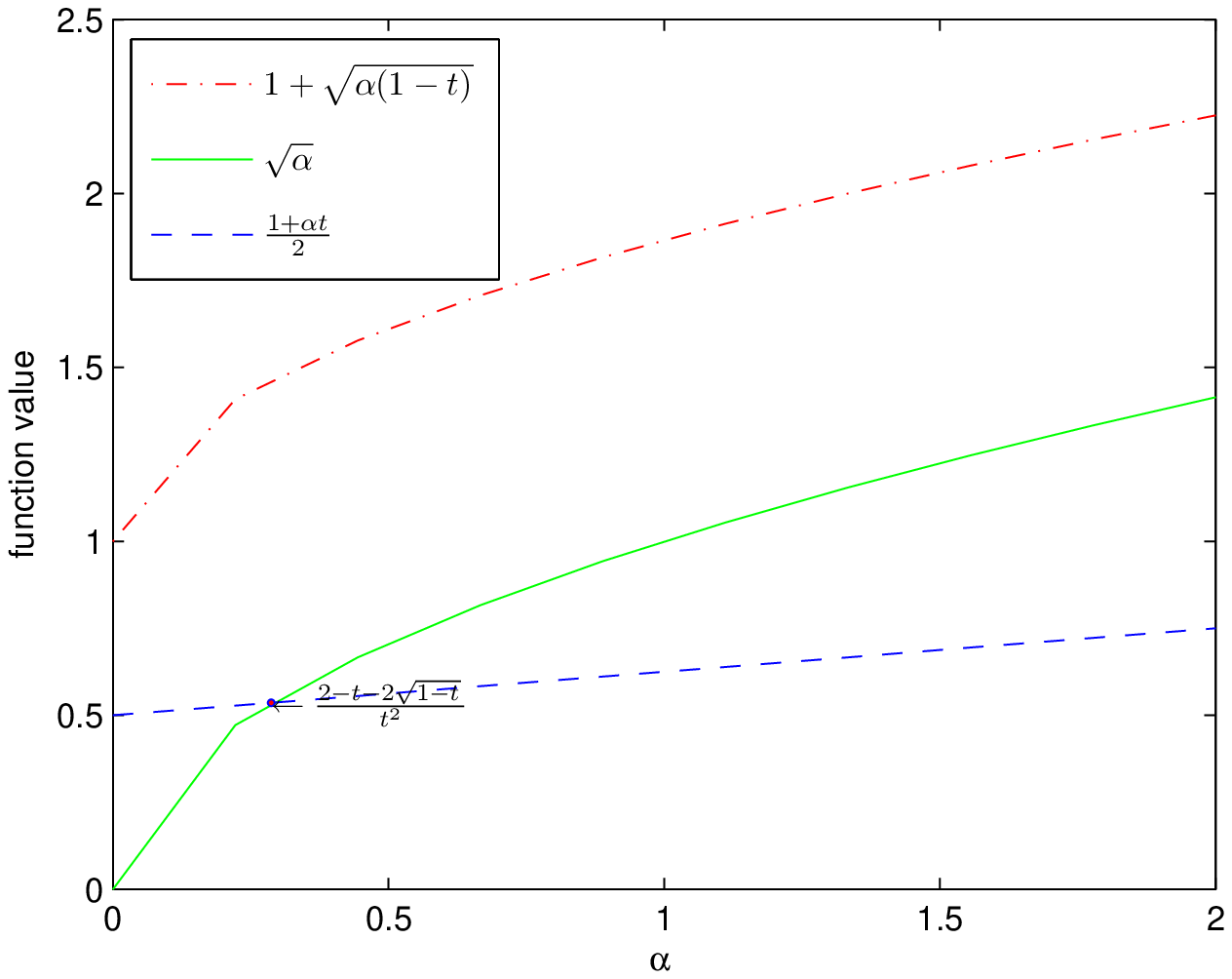}
\label{fig:side:b}
\end{minipage}
\vspace*{-0.35cm}
\caption{The relation of the three curves when $t = 0.25$. Left subplot: $\alpha \in [0,100]$. Right subplot: $\alpha \in [0,2]$.}\label{figCurve}
\end{figure*}

In Theorem 3.1, the relationship of three functions $1 + \sqrt{\alpha(1-t)}$, $\sqrt{\alpha}$ and $\frac{1+\alpha t}{2}$ is not determined, and therefore the reliability of the conclusion can not be guaranteed.
Through further research, we show the three function curves in Fig.\ref{figCurve}, and one can clearly see the relationship between them (Strict mathematical proofs of the relationship can be found in the supplementary material).

Now, according to the above analysis, we can rewrite Theorem 3.1 as follows:

\begin{theorem}

When $|x| \geq 1$, and $\alpha > \frac{2-t + 2\sqrt{1-t}}{t^{2}}$, Eq.(\ref{t.1}) reaches its minimum $E^{*}_{p}$ under the condition
\begin{equation}\label{t.3.1}
   p =
   \begin{cases}
   0 &\mbox{if $ \mid x \mid \leq  \sqrt{\alpha}$}\\
   x &\mbox{if $ \mid x \mid > \sqrt{\alpha}$}
   \end{cases}
\end{equation}

When $|x|\geq 1$, and $\frac{2-t + 2\sqrt{1-t}}{t^{2}} \geq \alpha \geq \frac{2-t - 2\sqrt{1-t}}{t^{2}}$, Eq.(\ref{t.1}) reaches its minimum $E^{*}_{p}$ under the condition
\begin{equation}\label{t.3.2}
   p =
   \begin{cases}
   0 &\mbox{if $ \mid x \mid \leq \frac{1+\alpha t}{2}$}\\
   $$sgn(x)$$ &\mbox{if $\frac{1+\alpha t}{2} < \mid x \mid \leq 1 + \sqrt{\alpha(1-t)}$}\\
   x &\mbox{if $ \mid x \mid > 1 + \sqrt{\alpha(1-t)}$}
   \end{cases}
\end{equation}

When $|x| \geq 1$, and $\frac{2-t - 2\sqrt{1-t}}{t^{2}} > \alpha > 0$, Eq.(\ref{t.1}) reaches its minimum $E^{*}_{p}$ under the condition
\begin{equation}\label{t.3.3}
   p =
   \begin{cases}
   $$sgn(x)$$ &\mbox{if $1 \leq \mid x \mid \leq 1 + \sqrt{\alpha(1-t)}$}\\
   x &\mbox{if $ \mid x \mid > 1 + \sqrt{\alpha(1-t)}$}
   \end{cases}
\end{equation}
\end{theorem}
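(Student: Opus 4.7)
The plan is to reduce Theorem~1.2 entirely to the already-proved Theorem~1.1 by analyzing how the three thresholds $\sqrt{\alpha}$, $\tfrac{1+\alpha t}{2}$, and $1+\sqrt{\alpha(1-t)}$ rank as $\alpha$ varies. Theorem~1.1 pins the minimizer down piecewise in $|x|$ in terms of the $\min$ and $\max$ of these three quantities, so each case of Theorem~1.2 will be obtained simply by substituting the correct ordering back into that statement.

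First I would locate the crossing points of the three curves. Setting $\sqrt{\alpha}=\tfrac{1+\alpha t}{2}$ and letting $u=\sqrt{\alpha}$ yields the quadratic $tu^{2}-2u+1=0$, whose roots $u=\tfrac{1\pm\sqrt{1-t}}{t}$ correspond, after squaring, to exactly the two critical values $\alpha_{\pm}:=\tfrac{2-t\pm 2\sqrt{1-t}}{t^{2}}$ that appear in the theorem. A direct substitution shows that at $\alpha=\alpha_{+}$ all three quantities $\sqrt{\alpha}$, $\tfrac{1+\alpha t}{2}$ and $1+\sqrt{\alpha(1-t)}$ collapse to the common value $\tfrac{1+\sqrt{1-t}}{t}$, while at $\alpha=\alpha_{-}$ only $\sqrt{\alpha}$ and $\tfrac{1+\alpha t}{2}$ coincide. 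This triple (respectively double) confluence is precisely why $\alpha_{\pm}$ are the correct dividers for the case split, and it matches the crossings displayed in Fig.~\ref{figCurve}.

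Next, a sign analysis of the quadratic $tu^{2}-2u+1$ combined with the elementary comparison $\sqrt{\alpha}\gtrless 1+\sqrt{\alpha(1-t)}$ (which simplifies to $\sqrt{\alpha}\,(1-\sqrt{1-t})\gtrless 1$) fixes the order of the three thresholds in each sub-range of $\alpha$. For $\alpha>\alpha_{+}$ one gets $1+\sqrt{\alpha(1-t)}<\sqrt{\alpha}<\tfrac{1+\alpha t}{2}$, so the middle window $(\tfrac{1+\alpha t}{2},\,1+\sqrt{\alpha(1-t)}]$ in Theorem~1.1 is empty and both $\min$ and $\max$ collapse onto $\sqrt{\alpha}$, producing \eqref{t.3.1}. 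For $\alpha_{-}\le\alpha\le\alpha_{+}$ one has $\tfrac{1+\alpha t}{2}\le\sqrt{\alpha}\le 1+\sqrt{\alpha(1-t)}$, so $\min=\tfrac{1+\alpha t}{2}$ and $\max=1+\sqrt{\alpha(1-t)}$, producing \eqref{t.3.2}. For $0<\alpha<\alpha_{-}$ the inequality $\sqrt{\alpha}<\tfrac{1+\alpha t}{2}$ holds again, but now $\sqrt{\alpha}<1$ so the ``$p=0$'' branch of Theorem~1.1 is inaccessible under the hypothesis $|x|\ge 1$; the remaining two branches simplify to \eqref{t.3.3} once one also observes that $\tfrac{1+\alpha t}{2}<1$ throughout this regime, which follows from $\alpha<\alpha_{-}<1/t$.

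The main obstacle I expect is not conceptual but bookkeeping. Boundary equalities at $\alpha=\alpha_{\pm}$ must be assigned consistently so that the piecewise definitions in \eqref{t.3.1}--\eqref{t.3.3} neither overlap nor leave gaps, and a handful of side inequalities such as $\alpha_{-}<1/t$, $\sqrt{\alpha_{-}}<1$, and the correct sign of $\sqrt{\alpha}\,(1-\sqrt{1-t})-1$ on each sub-range have to be verified explicitly in order to collapse or suppress branches cleanly. Once these edge cases are handled, the rest of the argument is pure algebra and is essentially read off from the three curves in Fig.~\ref{figCurve}.
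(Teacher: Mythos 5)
Your proposal is correct and follows essentially the same route as the paper: it reduces Theorem~1.2 to Theorem~1.1 by establishing the ordering of the three thresholds $\sqrt{\alpha}$, $\frac{1+\alpha t}{2}$ and $1+\sqrt{\alpha(1-t)}$ on each $\alpha$-range, which is exactly the content of the paper's Lemma~1.2 (there obtained from the quadratic $t^{2}\alpha^{2}+(2t-4)\alpha+1$, equivalent to your $tu^{2}-2u+1$ under $u=\sqrt{\alpha}$), and then collapses or suppresses the vacuous branches of Theorem~1.1 exactly as the paper does. The only cosmetic differences are that you re-derive the orderings inline rather than citing the lemma, and you make explicit the triple confluence of the thresholds at $\alpha=\frac{2-t+2\sqrt{1-t}}{t^{2}}$, which the paper only illustrates graphically.
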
\label{thm01}

The details proofs of the Theorem 3.2 can be found in the supplementary material.

\begin{theorem}

When $|x| < 1$, Eq.(\ref{t.1}) reaches its minimum $E^{*}_{p}$ under the condition
\begin{equation}\label{t.12}
   p =
   \begin{cases}
   0 &\mbox{if $ \mid x \mid \leq \sqrt{\alpha t}$}\\
   x &\mbox{if $ \mid x \mid > \sqrt{\alpha t}$}
   \end{cases}
\end{equation}

\end{theorem}\label{thm02}
\begin{proof}
Without loss of generality, we suppose $x\geq0$, and the proof of $x<0$ case is similar.

\textbf{1)} when $ x^{2} \leq \alpha t $, non-zeros $0<p\leq 1$ yields
\begin{equation}\label{lem.6}
    E_{p} = (x-p)^{2} + \alpha t \geq \alpha t \geq x^{2}
\end{equation}

Note that $p=0$ leads to  $E_{p} = x^{2}$. Comparing Eq.(\ref{lem.6}), the minimum energy $E_{p}$ is produced when $p = 0$.

\textbf{2)} when $\alpha t < x^{2} < 1$,  $p = 0$ leads to $E_{p} = x^{2}$. But when $p=x$, $E_{p}$ has its minimum value $\alpha t$.
Comparing these two values, the minimum energy $E_{p}$ is produced when $ p = x$.
\end{proof}

Our depth image upsampling algorithm is sketched in \textbf{Algorithm 1}. Parameter
$\beta$ is automatically adapted in iterations starting from a small value $\beta_{0}$, it is multiplied by $\kappa$ each time.
This scheme is inspired by Wang $et~al$ \cite{Wang08}, which shows that this scheme is effective to speed up convergence.
In all the experiments, we fix the regularization parameters $\beta_{0} = 0.0025$ and set $\kappa = 2$ to balance between efficiency and performance.

---------------------------------------------------------

\textbf{Algorithm 1: Depth Image Upsampling Algorithm}

1. Input: LR depth image $I_{L}$, HR intensity image $I_{H}$, parameter $\beta_{0}$

~~~~initialize $u \leftarrow D_{\uparrow}$, $\beta = \frac{1}{2}\beta_{0}$, $\lambda = 255 \times \frac{\beta}{\beta_{0}}$, $\rho = 0.1 \times \beta$ and $\kappa = 2$.

2. Repeat $p=1:Max_{iter}$:

~~~~~~solve for $z^{p}$ using Eq.(\ref{eqns2.2});

~~~~~~solve for $u^{p}$ using Eq.(14);

~~~~~~solve for $h^{p}$ and $v^{p}$ base on Theorem 3.2 and Theorem 3.3;

~~~~~~update parameters:  $\beta \leftarrow \kappa\beta$, $\lambda \leftarrow 255 \times \frac{\beta}{\beta_{0}}$ and $\rho \leftarrow 0.1 \times \beta$.

3. Output: HR depth image $u$.

---------------------------------------------------------

\section{Experimental results}

To verify the superiority of our method, we evaluate
the performance of the proposed method with respect to some state-of-the-art depth image upsampling methods. We perform experiments on a PC with Intel i7-5600U CPU (2.6 GHz) and 8 GB
RAM using MATLAB 2012b.  $20-30$ iterations are generally performed in our method. Most
computation is spent on FFT in Eq.(14) and guided image filtering  in Eq.(\ref{eqns2.2}).
Overall, it takes per iteration of 1.4 seconds to upsample $\times 4$ to $345 \times 272$ depth images.

In this experiment, we evaluate our method on two standard benchmark datasets for depth
map super-resolution: Following \cite{Park11,Ferstl13,Gupta}, we evaluate our results on the noisy Middlebury 2007 dataset. Additionally, in a second evaluation, we compare our method on the challenging ToFMark dataset $Books$, $Devil$ and $Shark$ which as proposed in \cite{Ferstl13}.

\subsection{Noisy Middlebury}
In this experiment, we evaluate the performance of the proposed method on $Art$, $Books$, $Moebius$, $Dolls$, $Laundry$ and $Reindeer$  of the Middlebury dataset.
To simulate the acquisition process of a Time-of-Flight sensor, these input images are added depth dependent Gaussian noise,
as proposed by Park $et~al$ \cite{Park11}.

We compare our method to simple upsampling methods, such as bicubic interpolation. We compare our proposed method to other
approaches that utilize an additional intensity image as guidance. Those methods
include the Markov Random Field (MRF) based approach in \cite{Diebel06}, the joint bilateral filtering with cost volume (JBFcv) in \cite{Yang07}, cross-based local multipoint filtering (CLMF) in  \cite{Lu12}
the guided image filter (GIF) in \cite{He13}, the non-local means filter (MRF+NLM) in \cite{Park11}, the variational model (TGV) in \cite{Ferstl13} and fast guided global interpolation (FGI) in \cite{LiDo16}. In addition, to illustrate the effectiveness of the low gradient regularization, we also compare with GFL0 proposed in Eq.(1).

Table 1 reports quantitative results in terms of the Root Mean Squared Error(RMSE)
between ground truth depth maps and the results by various depth upsampling methods including ours.

From the quantitative results in Table 1,  we observe that the proposed method clearly
performs better than state-of-the-art methods that utilize an additional
guidance input for most images and upsampling factors.

The proposed method clearly outperforms
several existing methods like GF \cite{He13}, MRF+NLM \cite{Park11}, CLMF \cite{Lu12}
and TGV \cite{Ferstl13} that used different color-guided upsampling or optimization techniques.
Our method also yields much smaller error rates than FGI method \cite{LiDo16}.
Noting that the GFL0 approach performs generally better than TGV and achieves relatively close results to the proposed method.
Our approach always achieves the best results in RMSE
because it allows for gradual pixel value variation which is
common in depth images.

    \begin{table}[h]
    \caption{Error as root mean squared error(RMSE) comparison on middlebury 2007 datasets with added noise for magnification factors ($\times 2$ and $\times 4$). We highlight the best result in boldface}
    \centering
\resizebox{\textwidth}{20mm}{
    \begin{tabular}{ccccccccccccc}
    \hline
    & \multicolumn{2}{c}{Art} & \multicolumn{2}{c}{Books} & \multicolumn{2}{c}{Moebius} & \multicolumn{2}{c}{Dolls} & \multicolumn{2}{c}{Laundry} & \multicolumn{2}{c}{Reindeer}\\
    \cline{1-13}
    &$\times 2$ & $\times 4$ & $\times 2$ & $\times 4$&$\times 2$ & $\times 4$&$\times 2$ & $\times 4$ & $\times 2$ & $\times 4$&$\times 2$ & $\times 4$ \\
    \hline
    Bicubic& 4.78  & 5.54   &  4.20 & 4.38 & 4.16& 4.31 & 4.16  & 4.30 & 4.37  &  4.74 & 4.51 & 4.95 \\

    MRF\cite{Diebel06} & 3.49  & 4.51 &  2.06 & 3.00 & 2.31& 3.11 & - & - & - & - & - & -\\

    JBFcv\cite{Yang07}& 3.01  & 4.02   & 1.87 & 2.23 &1.92 &2.42 & - & - & - & - & - & -\\

    CLMF\cite{Lu12} & 3.29  & 4.03   & 1.80 & 2.38 &1.79 &2.29 & 1.83  & 2.37 & 2.36  &  2.91 & 2.52 & 3.15\\

    GIF\cite{He13}& 3.55  & 4.41  &  2.37 & 2.74 &2.48 &2.83 & 1.79  & 2.64 & 2.33  &  3.22 & 2.63 & 3.43 \\

    MRF+NLM  \cite{Park11}&  3.74 &4.56& 1.95 & 2.61  & 1.96 &2.51& 2.06  & 2.61 & 2.99  &  3.63 &3.11 & 3.86\\

    TGV\cite{Ferstl13}& 3.19  &  4.06   & 1.52&  2.21  &\textbf{1.47} &2.03& 1.49  &  2.85 & 2.62   &3.44& 2.78    & 3.20\\

    FGI\cite{LiDo16}& 3.13  & 4.14 &  1.48 &  1.92 &   1.65   & \textbf{1.94} & 1.47 & 1.84 & 1.94   &  \textbf{2.59} &2.22   &  2.86\\

    GFL0&  2.78 &3.98& 1.40 & 1.89 & 1.63 & 2.04& 1.42 &1.86 & 1.90 &  2.65 & 2.07  &2.79\\

    Ours & \textbf{2.71} &\textbf{3.87} &  \textbf{1.34} & \textbf{ 1.82}  &1.57 & 2.01& \textbf{1.38} &\textbf{1.79} & \textbf{1.83} &  2.60 & \textbf{ 2.01}  &\textbf{2.76}\\
    \hline
    \end{tabular}
}
    \end{table}\label{tabMidd01}

Fig.\ref{figart}--\ref{figreindeer} show the visualize qualitative results of the Middlebury data with zoomed cropped regions.
From the figures, we can notice that the proposed algorithm also generates more visual appealing results than
the state-of-the-art approaches.  Our algorithm can preserve thin structures of the scene in regions.
Edges in our results are generally smoother and sharper along the depth boundaries. Our algorithm also preserves thin structures
in regions. Although FGI\cite{LiDo16}, TGV\cite{Ferstl13} and GFL0 also generate good RMSE scores, the results of them suffer from artifacts around boundaries
visually.

\begin{figure*}[!t]
  \hspace{-1.5cm}
  \includegraphics[width=1.25\linewidth]{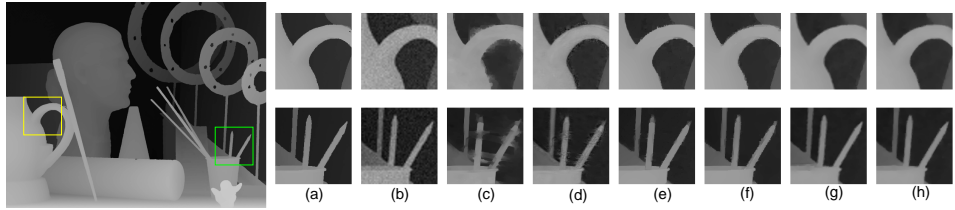}\\
  \vspace{-1cm}
  \caption{Visual comparison of $Art$ with cropped zoomed regions (scaling factor = 4).(a) Ground truth. (b) Bicubic. (c) MRF+NLM \cite{Park11}. (d) GIF\cite{He13}. (e) TGV\cite{Ferstl13}. (f) FGI\cite{LiDo16}. (g) GFL0. (h) Our proposed method}\label{figart}
\end{figure*}
\begin{figure*}[!t]
  \hspace{-1.5cm}
  \includegraphics[width=1.25\linewidth]{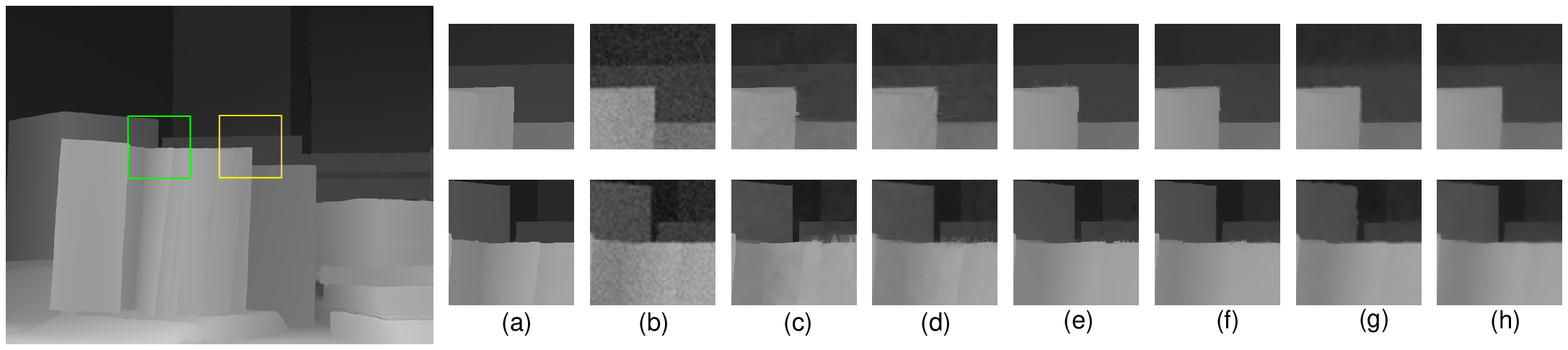}\\
  \vspace{-1cm}
  \caption{Visual comparison of $Books$ with cropped zoomed regions (scaling factor = 4).(a) Ground truth. (b) Bicubic. (c) MRF+NLM \cite{Park11}. (d) GIF\cite{He13}. (e) TGV\cite{Ferstl13}. (f) FGI\cite{LiDo16}. (g) GFL0. (h) Our proposed method}\label{figbooks}
\end{figure*}
\begin{figure*}[!t]
  \hspace{-1.5cm}
  \includegraphics[width=1.25\linewidth]{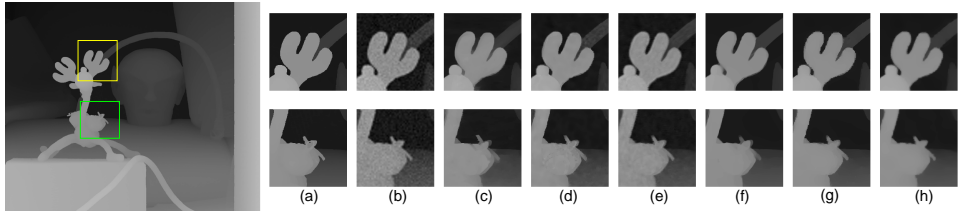}\\
  \vspace{-1cm}
  \caption{Visual comparison of $Reindeer$ with cropped zoomed regions (scaling factor = 4).(a) Ground truth. (b) Bicubic. (c) MRF+NLM \cite{Park11}. (d) CLMF\cite{Lu12}. (e) GIF\cite{He13}. (f) TGV\cite{Ferstl13}. (g) FGI\cite{LiDo16}. (h) Our proposed method}\label{figreindeer}
\end{figure*}

In our final experiment we evaluate our method on the challenging ToFMark dataset \cite{Ferstl13} consisting of three time-of-
ight (ToF) pairs, $Books$, $Shark$ and $Devil$, with ground-truth depth maps. The depth
maps are of size $120 \times 160$, and the intensity images are of size
$610 \times 810$. This corresponds to anupsampling factor of approximately $\times 5$.
In the low-resolution depth maps we add depth dependent noise and back
project the remaining points to the target camera coordinate system.

We compare our results to simple bicubic interpolation, and three state-of-the-art depth map super-resolution methods that utilize an additional guidance image as input. The quantitative results are presented in
Table 2 as RMSE in $mm$. Even on this difficult dataset we are at least on par with other four
classic or state-of-the-art methods for all the three test cases.

   \begin{table}[h]
    \caption{Results on real Time-of-Flight data from the ToFMark benchmark dataset. We report the error as
RMSE in mm and highlight the best result in boldface}
    \centering
    \setlength{\tabcolsep}{9mm}{
    \begin{tabular}{cccc}
    \hline
    & {Books} & {Devil} & {Shark}\\
    \hline
    Bicubic& 27.78  & 26.25 & 31.68  \\

    JGF\cite{Liu13} & 30.14  & 32.84 & 35.73   \\

    CLMF\cite{Lu12} & 25.67  & 23.86 & 28.93  \\

    TGV\cite{Ferstl13}& 24.68  &  23.19 &29.89  \\

    Ours & \textbf{24.53} &\textbf{23.04}& \textbf{28.46}\\
    \hline
    \end{tabular}
    }
    \end{table}\label{tabToF01}

\begin{figure*}[!t]
  \hspace{-1.5cm}
  \includegraphics[width=1.25\linewidth]{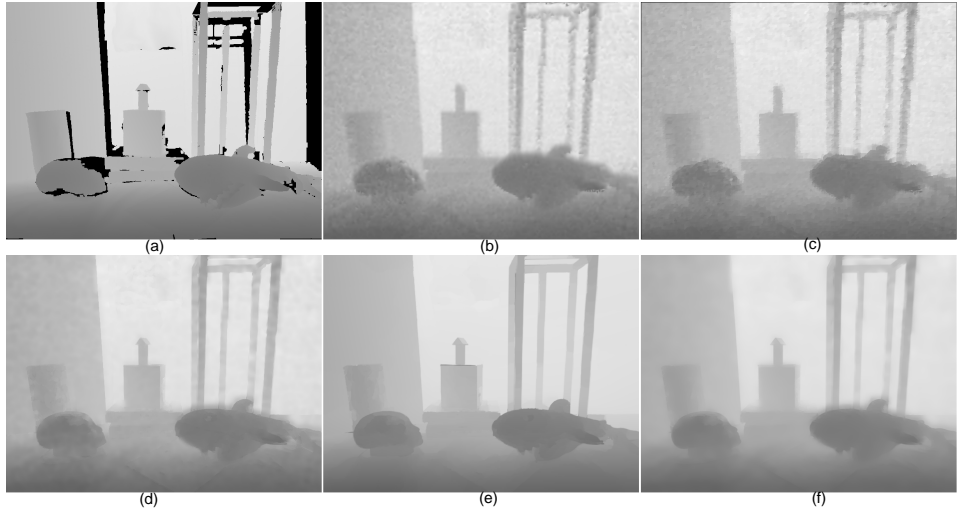}\\
  \vspace{-1.5cm}
  \caption{Qualitative results for the ToFMark dataset sample $Shark$.(a) Ground truth. (b) Bicubic. (c) JGF\cite{Liu13}. (d) CLMF\cite{Lu12}. (e)TGV\cite{Ferstl13}. (f) Our proposed method}\label{figshark}
\end{figure*}

\begin{figure*}[!t]
  \hspace{1.5cm}
  \includegraphics[width=0.75\linewidth]{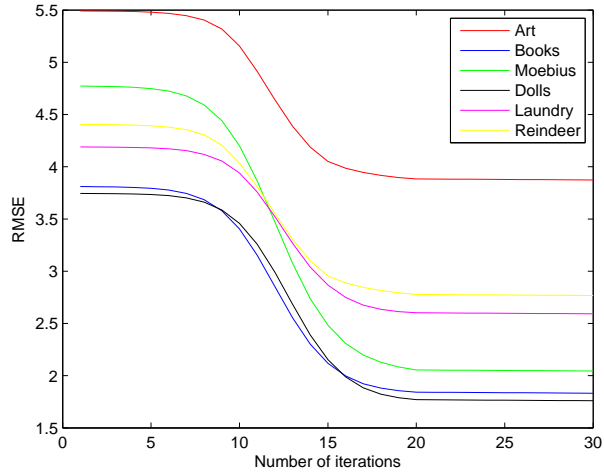}\\
  \vspace{-1cm}
  \caption{Change of the RMSE with iterations with upscaling factor 4}\label{figconvg}
\end{figure*}

In Fig.\ref{figshark}, we show the result of upsampling the depth image $Shark$.
It is observed that depth maps recovered by Bicubic, JGF\cite{Liu13}, and CLMF\cite{Lu12} still
contain considerable amount of noise, while the results obtained by TGV\cite{Ferstl13} and our method are much more clear.
By closer inspection, the TGV method in some cases introduces faulty structures in
regions where the associated intensity image has rich textures,
e.g., the shape of the shark's dorsal fin is deformation.

\subsection{Convergence}

Since the guided image filter is highly nonlinear, it is difficult to prove the global convergence of our algorithm in theory.
In this work, we only provide empirical evidence to show the stability of the proposed algorithm.

In Fig.\ref{figconvg}, we show the convergence properties of the proposed method
for test images with upscaling factor 4.  One can see that all the RMSE curves reduce monotonically with the increasing of
iteration number, and finally become stable and flat. One can also found that 30 iterations are typically sufficient.

\section{Conclusion}

This work presents a new framework to recover depth
maps from low quality measurements. Based on
the gradient statistics of depth images, we propose the low
gradient regularization and combine it with the guided filter
into the depth upsampling approach. And we present a solution to the proposed low gradient minimization problem based on threshold shrinkage. In a quantitative evaluation using widespread
datasets (Middlebury and TofMark) we show that the proposed algorithm clearly performs better than existing
state of the art methods in terms of RMSE.

\section*{References}

\bibliography{mybibfile}

\begin{thebibliography}{10}
\expandafter\ifx\csname url\endcsname\relax
  \def\url#1{\texttt{#1}}\fi
\expandafter\ifx\csname urlprefix\endcsname\relax\def\urlprefix{URL }\fi
\expandafter\ifx\csname href\endcsname\relax
  \def\href#1#2{#2} \def\path#1{#1}\fi

\bibitem{Girshick}
R.Girshick, J.~Shotton, P.Kohli, A.Criminisi, A.W.Fitzgibbon, Efficient
  regression of general-activity human poses from depth images, 2011, iEEE
  International Conference on Computer Vision (ICCV) 2011.

\bibitem{SP2015}
Y.Yang, Q.Liu, H.Liu, Dense depth image synthesis via energy minimization for
  three-dimensional video, Signal Processing 112~(3) (2015) 199--208.

\bibitem{Gupta}
S.Gupta, R.Girshick, P.Arbelaez, J.Malik, Learning rich features from rgbd
  images for object detection and segmentation, european Conference on Computer
  Vision (ECCV) 2014 (2014).

\bibitem{SP2017}
X.Li, M.Fang, J.J.Zhang, Domain adaptation from rgb-d to rgb images, Signal
  Processing 131.

\bibitem{Riegler}
G.Riegler, MRüther, H.Bischof, Atgv-net: Accurate depth super-resolution,
  european Conference on Computer Vision (ECCV) 2016:268-284 (2016).

\bibitem{Hui}
T.W.Hui, C.L.Chen, X.Tang, Depth map super-resolution by deep multi-scale
  guidance, european Conference on Computer Vision (ECCV) 2016: 353-369 (2016).

\bibitem{Ferstl13}
D.Ferstl, C.Reinbacher, R.Ranftl, M.Ruther, H.Bischof, Image guided depth
  upsampling using anisotropic total generalized variation, iEEE International
  Conference on Computer Vision (ICCV) 2013 : 993-1000 (2013).

\bibitem{Kwon}
H.Kwon, Y.W.Tai, S.Lin, Data-driven depth map refinement via multi-scale spare
  representations, iEEE Conference on Computer Vision and Pattern Recognition
  (CVPR) (2015) (2015).

\bibitem{YhangICIP}
H.Yang, Z.Zhang, Y.Guan, An adaptive parameter estimation for guided filter
  based image deconvolution, Signal Processing 138~(1) (2017) 16--26.

\bibitem{He13}
K.He, J.Sun, X.Tang, Guided image filtering, IEEE Transactions on Pattern
  Analysis and Machine Intelligence 35~(6) (2013) 1397--1409.

\bibitem{Kang13}
S.Li, X.Kang, J.Hu, Image fusion with guided filtering, IEEE Transactions on
  Image Processing 22~(7) (2013) 2864--2875.

\bibitem{Cai17}
H.Xue, S.Zhang, D.Cai, Depth image inpainting: Improving low rank matrix
  completion with low gradient regularization, IEEE Transactions on Image
  Processing 26~(9) (2017) 4311--4320.

\bibitem{Xu11}
L.Xu, C.Lu, Y.Xu, Image smoothing via l0, gradient minimization, sIGGRAPH Asia
  Conference. ACM, 2011:174 (2011).

\bibitem{Nguyen15}
R.M.H.Nguyen, M.S.Brown, Fast and effective l0 gradient minimization by region
  fusion, iEEE International Conference on Computer Vision (ICCV).
  2015:208-216. (2015).

\bibitem{Yang10}
J.Yang, J.Wright, T.~S.Huang, Image super-resolution via sparse representation,
  IEEE Transactions on Image Processing 19~(11) (2010) 2861--2873.

\bibitem{Timofte13}
R.Timofte, V.De, L.V.Gool, Anchored neighborhood regression for fast
  example-based super-resolution, iEEE International Conference on Computer
  Vision. 2013:1920-1927 (2013).

\bibitem{Li12}
Y.Li, T.Xue, L.Sun, Joint example-based depth map super-resolution, iEEE
  International Conference on Multimedia and Expo. 2012:152-157 (2012).

\bibitem{Ferstl15}
D.Ferstl, M.Rüther, H.Bischof, Variational depth superresolution using
  example-based edge representations, iEEE International Conference on Computer
  Vision.2015:513-521 (2015).

\bibitem{Schulter15}
S.Schulter, C.~Leistner, H.Bischof, Fast and accurate image upscaling with
  super-resolution forests, computer Vision and Pattern Recognition. IEEE,
  2015:3791-3799. (2015).

\bibitem{Mahmoudi12}
M.Mahmoudi, G.~Sapiro, Sparse representations for range data restoration, IEEE
  Transactions on Image Processing 21~(5) (2012) 2909--2915.

\bibitem{Kopf11}
J.Kopf, M.F.Cohen, D.Lischinski, Joint bilateral upsampling, Acm Transactions
  on Graphics 26~(3) (2011) 96.

\bibitem{Yang07}
Q.Yang, R.Yang, J.Davis, Spatial-depth super resolution for range images,
  computer Vision and Pattern Recognition. IEEE, 2011:1-8. (2011).

\bibitem{Liu13}
M.Y.Liu, O.Tuzel, Y.Taguchi, Joint geodesic upsampling of depth images,
  computer Vision and Pattern Recognition. IEEE, 2013:169-176 (2013).

\bibitem{Lu15}
J.Lu, D.Forsyth, Sparse depth super resolution, computer Vision and Pattern
  Recognition. IEEE, 2015:2245-2253 (2015).

\bibitem{LiDo16}
Y.Li, D.Min, M.N.Do, Fast guided global interpolation for depth and motion,
  european Conference on Computer Vision. 2016: 717-733 (2016).

\bibitem{Diebel06}
J.Diebel, S.Thrun, An application of markov random fields to range sensing,
  advances in neural information processing systems. 2006: 291-298 (2006).

\bibitem{Park11}
J.Park, H.Kim, Y.W.Tai, High quality depth map upsampling for 3d-tof cameras,
  iEEE International Conference on Computer Vision (ICCV), 2011: 1623-1630
  (2011).

\bibitem{Aodha12}
M.~Aodha, N.D.F.Campbell, A.Nair, Patch based synthesis for single depth image
  super-resolution, european Conference on Computer Vision (ECCV), 2012: 71-84
  (2012).

\bibitem{Yang14}
J.Yang, X.Ye, K.~Li, Color-guided depth recovery from rgb-d data using an
  adaptive autoregressive model, IEEE transactions on image processing 23~(8)
  (2014) 3443--3458.

\bibitem{Dong14}
C.Dong, C.L.Chen, K.He, Learning a deep convolutional network for image
  super-resolution, european Conference on Computer Vision (ECCV), 2014:
  184-199 (2014).

\bibitem{Kim15}
J.Kim, J.K.Lee, K.M.Lee, Accurate image super-resolution using very deep
  convolutional networks, iEEE Conference on Computer Vision and Pattern
  Recognition. 2016: 1646-1654 (2016).

\bibitem{Dong14PAMI}
C.Dong, C.L.Chen, K.He, Image super-resolution using deep convolutional
  networks, IEEE Transactions on Pattern Analysis Machine Intelligence 38~(2)
  (2014) 295--307.

\bibitem{Xie16}
J.Xie, R.S.Feris, M.T.Sun, Edge-guided single depth image super resolution,
  IEEE transactions on image processing 25~(1) (2016) 428--438.

\bibitem{Handa16}
A.Handa, V.Patraucean, V.Badrinarayanan, Understanding real world indoor scenes
  with synthetic data, iEEE Conference on Computer Vision and Pattern
  Recognition. 2016: 4077-4085 (2016).

\bibitem{Hui16}
T.W.Hui, C.C.Loy, X.Tang, Depth map super-resolution by deep multi-scale
  guidance, european Conference on Computer Vision (ECCV), 2014: 353-369
  (2014).

\bibitem{Storath14}
M.Storath, A.Weinmann, L.Demaret, Jump-sparse and sparse recovery using potts
  functionals, IEEE Transactions on Signal Processing 62~(14) (2014)
  3654--3666.

\bibitem{Wang08}
Y.Wang, Y.Yang, W.Yin, A new alternating minimization algorithm for total
  variation image reconstruction, SIAM J. Imaging Sciences 1~(3) (2008)
  248--272.

\bibitem{Lu12}
J.~Lu, K.~Shi, D.~Min, L.~Lin, M.~N. Do, Cross-based local multipoint
  filtering, iEEE Conference on Computer Vision and Pattern Recognition,
  2012:430-437 (2012).

\end{thebibliography}

\end{document}